\newcommand{\bp}{\mathbf{p}}
\newcommand{\bv}{\mathbf{v}}
\newcommand{\bP}{\mathbf{P}}
\newcommand{\bw}{\mathbf{w}}
\newcommand{\bx}{\mathbf{x}}
\newcommand{\hbx}{\hat{\mathbf{x}}}
\newcommand{\bX}{\mathbf{X}}
\newcommand{\hbX}{\hat{\mathbf{X}}}
\newcommand{\bW}{\mathbf{W}}
\newcommand{\bM}{\mathbf{M}}
\newcommand{\Real}{\mathbb{R}}
\newcommand{\trace}[1]{\mbox{Tr}\left[#1\right]}
\DeclareMathOperator{\st}{s.t.}
\DeclareMathOperator{\sign}{sign}
\newcommand{\rank}{\mbox{Rank}}
\newcommand{\bDelta}{\mathbf{\Delta}}
\newcommand{\E}[2]{\mathbb{E}_{#1}\left\{#2\right\}}
\newcommand{\bI}{\mathbf{I}}
\newcommand{\bzero}{\mathbf{0}}
\newcommand{\bone}{\mathbf{1}}
\newcommand{\sic}[1]{\mbox{SIC}\left(#1\right)}
\newtheorem{remark}{Remark}
\newtheorem{theorem}{Theorem}
\title{Informative Data Projections:\\A Framework and Two Examples}
\author{Tijl De Bie$^{1,2}$, Jefrey Lijffijt$^{1,2}$, Ra\'ul Santos-Rodriguez$^2$, and Bo Kang$^1$
%
% Optional short acknowledgment: remove next line if non-needed
\thanks{This work was supported by the European Union through the ERC Consolidator Grant FORSIED (project reference 615517).}
%
% DO NOT MODIFY THE FOLLOWING '\vspace' ARGUMENT
\vspace{.3cm}\\
%
% Addresses and institutions (remove "1- " in case of a single institution)
1- Data Science Lab - Ghent University \\
Technicum, Sint-Pietersnieuwstraat 41, 9000 Gent - Belgium
%
% Remove the next three lines in case of a single institution
\vspace{.1cm}\\
2- Dept. of Engineering Mathematics - University of Bristol \\
MVB Woodland Road, BS8 1UB, Bristol - United Kingdom\\
}
\begin{document}

\maketitle

\begin{abstract} 
Methods for Projection Pursuit aim to facilitate the visual exploration of high-dimensional data by identifying interesting low-dimensional projections. A major challenge is the design of a suitable quality metric of projections, commonly referred to as the \emph{projection index}, to be maximized by the Projection Pursuit algorithm. In this paper, we introduce a new information-theoretic strategy for tackling this problem, based on quantifying the amount of information the projection conveys to a user given their \emph{prior beliefs} about the data. The resulting projection index is a subjective quantity, explicitly dependent on the intended user. As a useful illustration, we developed this idea for two particular kinds of prior beliefs. The first kind leads to PCA (Principal Component Analysis), shining new light on when PCA is (not) appropriate. The second kind leads to a novel projection index, the maximization of which can be regarded as a robust variant of PCA. We show how this projection index, though non-convex, can be effectively maximized using a modified power method as well as using a semidefinite programming relaxation. The usefulness of this new projection index is demonstrated in comparative empirical experiments against PCA and a popular Projection Pursuit method.
\end{abstract}

%========================================================
%========================================================
\section{Introduction}
%========================================================
%========================================================

The analysis of high-dimensional data often starts with dimensionality reduction, to facilitate initial visual exploration by a human user. Most analysts will instinctively do this by Principal Component Analysis (PCA) \cite{jolliffe2002principal}: it is widely available, computationally efficient, easy to interpret, and in the common situation where the data lies close to a low-dimensional subspace, PCA is effective in retrieving it.
%However, \cite{friedman1974projection,friedman1987exploratory} argued that the interestingness of a projection is often not related to its variance, as implicitly assumed when using PCA for this purpose. %Instead, a projection should be regarded as more interesting if it reveals more of the `structure' of the data, a property which may be considered affine invariant \cite{huber1985projection}.
However, in user interactions with their PRIM-9 system for interactive data exploration \cite{friedman1974projection}, it was observed that the human operators tended to prefer projections that reveal \emph{some form of structure}, %e.g. in the form of dense well-separated clusters, 
rather than projections of \emph{high variance} as preferred by PCA. Later \cite{huber1985projection} provided theoretical arguments for why projections in which the data are Normally distributed are \emph{least} interesting, as they essentially reveal no structure in the data.\footnote{This means that Independent Component Analysis (ICA) and Projection Pursuit (PP) are largely equivalent: appropriate PP methods can be and are being used to do ICA \cite{hyvarinen1998analysis,hyvarinen2004independent,hyvarinen1999fast}.}

Quantifying the precise extent to which a projection \emph{is} interesting, however, is riddled with conceptual and practical difficulties. In fact, it seemed obvious to the early PP research protagonists that a universally useful \emph{projection index} that formalizes the interestingness of a projection cannot exist (see e.g. \cite{huber1985projection}). The answer, therefore, was the introduction of lots of different projection indices. Most of these aim to quantify the extent to which the distribution of the projected data departs from the Normal distribution, and all strike a different balance between practical usefulness, computational complexity, and robustness against outliers (e.g. \cite{friedman1974projection,huber1985projection,hyvarinen1998analysis,friedman1987exploratory} and references therein). Indeed, due to the elusive nature of the core question of what makes a projection interesting to a given user, the focus shifted towards secondary questions around robustness aspects and computational properties of the projection indices.

\paragraph{Contributions in this paper} Here our aim is to return the focus to the user once again, and directly ask the question of how interesting a given data projection is \emph{to a particular user}. Our work presents the first generic design strategy for projection indices that explicitly depend on the intended user.

In Section \ref{sec:outline} we introduce a strategy for quantifying the interestingness of a projection as its information content against the background of a probability distribution representing the user's beliefs about the data. In Sections \ref{sec:pca} and \ref{sec:robust_pca} we then apply this strategy for two particular types of prior beliefs, leading to a novel interpretation for PCA in the first case, and a novel projection index in the second, the optimization of which represents a robust variant of PCA. Although this latter projection index is non-convex, we introduce two algorithms for effectively optimizing it. We end by empirically illustrating the benefits of this robust PCA variant as compared to standard PCA and FastICA, a popular PP method that is also used for ICA \cite{hyvarinen1999fast}.

%========================================================
%========================================================
\section{The subjective information content of a data projection -- general outline\label{sec:outline}}
%========================================================
%========================================================

The proposed strategy for quantifying the subjective information content of a data projection 
closely follows the generic approach introduced by \cite{de2011information,de2013subjective}, where the choices made
are extensively motivated. Here we will merely provide some intuition underlying the approach.

The strategy from \cite{de2011information} for quantifying information content rests on the availability of a representation of the user's
prior belief state in the form of a probability density $p_\bX$ over the set of possible values for the data $\bX$ -- \emph{in casu}
over the set $\Real^{n\times d}$. Given this so-called \emph{background distribution}, one can then compute the marginal probability
density function of a data projection $\bp_\bw=\bX\bw$ defined by the weight vector $\bw\in\Real^{d}$. 
We will denote this marginal probability density function as  $p_{\bX\bw}$.

We call a \emph{projection pattern} a statement of the form $\bp_\bw\in[\hbX\bw,\hbX\bw+\Delta\bone)$,
specifying that the value $\bp_\bw$ of the projected data lies within an interval of width $\Delta$ around $\hbX\bw$ (with $\hbX\in\Real^{n\times d}$ the empirical data). 
This is what is conveyed to a user through a scatter plot of the data projections $\hbX\bw$, with plotting resolution $\Delta$.
Clearly, the smaller the probability $\mbox{Prob}_{\bp_\bw\sim p_{\bX\bw}}\left(\bp_\bw\in[\hbX\bw,\hbX\bw+\Delta\bone)\right)$, the more the surprising and hence informative this pattern would be \emph{to that particular user}. This is argued more formally by \cite{de2011information}, where 
the negative logarithm of this probability is shown to be a good measure of Subjective Information Content (SIC). We denote this as follows:
\begin{equation*}
\sic{\hbX\bw}=-\log\left(\mbox{Prob}_{\bp_\bw\sim p_{\bX\bw}}\left(\bp_\bw\in[\hbX\bw,\hbX\bw+\Delta\bone)\right)\right)
\end{equation*}
\emph{This is what we propose as a generic projection index, quantifying the interestingness of a projection.}

An important question is how $p_\bX$ and hence the marginals $p_{\bX\bw}$ can be obtained, without overburdening the user.
In \cite{de2011information} it is suggested that the user is often capable of specifying
aspects of their belief state as constraints on expected values of specified statistics of the data.
%For example, we will consider the case where the user expresses a belief that the expected 2-norm squared, averaged over the data points, is equal to a specified constant. 
He argued that the \emph{Maximum Entropy} (MaxEnt) distribution subject to these constraints is an attractive choice,
given its unbiasedness, its robustness, and in being an \emph{exponential family} model \cite{wainwright2008graphical},
the inference of which is well understood and often computationally tractable.
%Inferring the parameters of this MaxEnt distribution is a convex problem, the complexity of which depends on the mathematical shape of the statistics. That said, the MaxEnt distributions relevant in the current paper are well-known in the statistics literature.

In Sections \ref{sec:pca} and \ref{sec:robust_pca}, we will develop this strategy for two different assumptions regarding the prior beliefs of the user, illustrating how it can lead to new algorithms, as well as to new insights into existing ones.
Throughout this paper, we assume the data has been centered (i.e. has zero mean).

%========================================================
%========================================================
\section{PCA: an information theoretic interpretation\label{sec:pca}}
%========================================================
%========================================================

Here we show how standard PCA can be derived using this generic strategy for designing projection indices. This exercise will also present new insight in what cases PCA is an effective PP approach. 
%This is done by specifying a specific prior belief against which the subjective information content is equivalent to the PCA objective function. As we will see, that prior belief state is essentially a state in which the user expects to see no outliers, i.e. in which the user expects the data to be concentrated around the origin with no directional preference.

%========================================================
\subsection{The prior beliefs}
%========================================================

A user not expecting any outliers can be assumed capable of expressing an expectation about the value of the average two-norm squared of the data points:
\begin{equation}\label{eq:pca_belief}
\E{\bX\sim p_\bX}{\frac{1}{n}\sum_{i}^{n}\bx_i'\bx_i}=\sigma^2.
\end{equation}
I.e., the user has a specific expectation about the average squared norm of the data points.%\footnote{A user might find it hard to specify a precise value for $\sigma$. Fortunately, we will see that the value for $\sigma$ is in fact irrelevant in the relative ranking of different subspaces according to their information content. Thus, the information content derived from this assumption is appropriate as long as users feel they could estimate the average 2-norm squared, if asked to do so.}

The MaxEnt distribution %over the data space $\Real^{n\times d}$ 
subject to this constraint is well known and equal to a product distribution of multivariate Normal distributions $\mathcal{N}(\bzero,\sigma\bI)$, with one factor for each of the data points $\bx_i$.
More formally, the density function $p_\bX$ for the dataset representing the background distribution is:
\begin{equation}\label{eq:pca_bd}
p_\bX(\bX)=\prod_i p_\bx(\bx_i), \mbox{where } p_\bx \mbox{, defined as } 
p_\bx(\bx)=\frac{1}{\sqrt{2\pi\sigma^2}^{d}}\exp\left(-\frac{\bx'\bx}{2\sigma^2}\right),
\end{equation}
is the probability density function for each of the individual data points in the dataset.\footnote{%While we consider this a plausible prior belief state in realistic situation,
Note that it is not our intention to argue in favor of this; our intent is merely to investigate what is a suitable projection index \emph{if} this is an accurate representation of the prior belief state.}
%Equation~(\ref{eq:pca_bd}) represents a probabilistic model for the user's belief state defined by the constraint in Eq.~(\ref{eq:pca_belief}).

%========================================================
\subsection{The subjective information content}
%========================================================

Given a Normal random vector $\bx\sim\mathcal{N}(\bzero,\sigma\bI)$, a projection onto a weight vector $\bw$ with $\bw'\bw=1$ is also Normal: $\bx'\bw\sim\mathcal{N}(0,\sigma)$. %This follows directly from the fact that the mean of a linear combination of random variables is equal to the linear combination of the means, and that the variance of a linear combination of random variables is equal to the weighted sum of the variances, where the weights are the squares of the weights in the linear combination.
Thus, given the independence of the data points under the background distribution, 
the marginal probability density function $p_{\bX\bw}$ for the projection $\bp_\bw=\bX\bw$  of a dataset $\bX$ sampled from the background distribution is given by:
\begin{equation*}
p_{\bX\bw}(\bp_\bw)=\frac{1}{\sqrt{2\pi\sigma^2}^n}\exp\left(-\frac{\bp_\bw'\bp_\bw}{2\sigma^2}\right).
\end{equation*}
We can thus compute the SIC of a projection pattern $\bp_\bw\in[\hbX\bw,\hbX\bw+\Delta\bone)$ as minus the logarithm of its probability under this marginal density function $p_{\bX\bw}$. Noting that for small enough $\Delta$, $\mbox{Prob}_{\bp_\bw\sim p_{\bX\bw}}\left(\bp_\bw\in[\hbX\bw,\hbX\bw+\Delta\bone)\right)\approx \Delta^n\cdot p_{\bX\bw}(\hbX\bw)$, this leads to:
\begin{align}
\sic{\hbX\bw}&=-\log\left(p_{\bX\bw}(\hbX\bw)\right)-n\log(\Delta)\nonumber\\
&=\frac{n}{2}\log(2\pi\sigma^2)-n\log(\Delta) + \frac{1}{2\sigma^2}\bw'\hbX'\hbX\bw.\label{eq:pca_sic}
\end{align}

It is trivial to generalize this toward $r$-dimensional projections $\bP_{\bW_r}=\bX\bW_r$ of the dataset, defined by an orthogonal matrix $\bW_r\in\Real^{d\times r}$ with $\bW_r'\bW_r=\bI$. With $\bDelta'=\left(\begin{array}{cccc}\Delta_1&\Delta_2&\cdots&\Delta_r\end{array}\right)$ a vector containing the resolutions for each of the projections, then the information content of the pattern $\bP_{\bW_r}\in[\hbX\bW_r,\hbX\bW_r+\bone\bDelta')$ specifying $r$ projections simultaneously, is given by:
\begin{equation}
\sic{\hbX\bW_r}=\frac{nr}{2}\log(2\pi\sigma^2)-n\sum_{i=1}^r\log(\Delta_i) + \frac{1}{2\sigma^2}\trace{\bW_r'\hbX'\hbX\bW_r}.\label{eq:pca_sic2}
\end{equation}

%========================================================
\subsection{Finding the most informative projections}
%========================================================

For fixed $\Delta$, maximizing the SIC from Eq.~(\ref{eq:pca_sic}) is done by solving:
\begin{equation*}
\max_\bw \bw'\hbX'\hbX\bw, \st \bw'\bw=1,
\end{equation*}
equivalent to the optimization problem to be solved for finding the first principal component in classical PCA. Similarly, optimizing Eq.~(\ref{eq:pca_sic2}) is equivalent to finding the $r$ dominant PCA components.

\begin{remark}
The assumption that $\Delta$ is constant is not always warranted. When making a scatter plot, it is often desirable to stretch the axes in order to fill available space. Then $\Delta\propto\max(\hbX\bw)-\min(\hbX\bw)$, such that finding the most informative projection amounts to solving:
\begin{equation*}
\max_\bw\bw'\hbX'\hbX\bw - 2\sigma^2 n\log\left(\max(\hbX\bw)-\min(\hbX\bw)\right),\st \bw'\bw=1,
\end{equation*}
This is another explanation of why simple variance maximization done by PCA is rarely a good approach for exploratory data analysis.
\end{remark}

From Eq.~(\ref{eq:pca_sic2}) the most informative $r$-dimensional projection of the data are found by solving:
\begin{equation*}
\max_{\bW_r\in\Real^{d\times r}}\trace{\bW_r'\hbX'\hbX\bW_r}, \st \bW_r'\bW_r=\bI,
\end{equation*}
again equivalent to the optimization problem for finding the $r$ dominant principal components.

%The $r$ dominant principal components can be found either by directly solving this problem, or by iteratively searching for the most interesting 1-dimensional projection in the data projected onto the orthogonal complement of the principal components already found.

\begin{remark}
The exact value of $\sigma$ will have no effect on the relative information content of different possible projections, hence the user does not need to provide this value.
\end{remark}

%\begin{remark}
%Hereafter we will refer to traditional PCA as \emph{PCA}, to emphasize its possible derivation as finding the most informative projections given a Normal background distribution.
%\end{remark}

%========================================================
%========================================================
\section{t-PCA: for users expecting a heavy tailed distribution\label{sec:robust_pca}}
%========================================================
%========================================================

%The perspective taken in this paper on PP elucidates the assumptions about the user (prior belief on the average squared norm of the data points) and visualization process (fixed resolution) under which PCA is optimal from an information theoretic point of view. In doing so, it also clarifies when and why it is often suboptimal: when the user may expect to see outliers in the data such that they feel unable to specify an expectation on the average 2-norm squared of the data points, or when the dimensions of the scatter plot are adapted to the extremal values in the projections.

The previous section elucidates the assumptions on the user (prior belief on average squared norm of the data points) and visualization approach (constant resolution) for PCA to be optimal. In the present section we will develop an alternative for PCA when the assumption on the user's prior beliefs is altered, to be more accommodating for outliers.

%========================================================
\subsection{The prior beliefs}
%========================================================

As the user's prior belief about the data, we here propose the following:
\begin{equation*}
\E{\mathbf{X}\sim p_\bX}{\frac{1}{n}\sum_{i}^{n}\log\left(1+\frac{1}{\rho}\bx_i'\bx_i\right)} = c.
\end{equation*}
%This kind of prior belief specifies an expectation on a measure of the spread of the data, which amplifies contributions from points with small norm relative to the data points with large norm through a log transformation. Thus, using such a prior belief rather than say a prior belief on the second moment considers outliers in the data relatively more probable. The smaller $\rho$, the less important the constant term in the argument of the logarithm, the more this statistic will therefore vary logarithmically with the norm of $\bx_i$, and thus the more tolerant this model will be to outliers.

Thus, rather than specifying an expectation on the spread of the data, for small values of $\rho$ the user specifies an expectation on the \emph{order of magnitude} of the spread of the data. When the user expects outliers to be present, they may feel able to specify an expectation on the average \emph{order of magnitude} of the 2-norms of the data points, rather than on the average of their 2-norms themselves.

For notational convenience, let us introduce the function
$\kappa(\nu)=\psi\left(\frac{\nu+d}{2}\right)-\psi\left(\frac{\nu}{2}\right)$,
where $\psi$ represents the digamma function. In the sequel the value of $\kappa^{-1}(c)$ will need to be used, denoted as $\nu$ for brevity. Then, the background distribution can be derived by relying on \cite{Zog:99}, where it is shown that the MaxEnt distribution subject to the specified prior information is the product of independent multivariate standard $t$-distributions with density function $p_\bx$ defined as:
\begin{equation*}
p_\bx(\bx) = \frac{\Gamma\left(\frac{\nu+d}{2}\right)}{\sqrt{(\pi\rho)^d}\Gamma\left(\frac{\nu}{2}\right)}\cdot\frac{1}{\left(1+\frac{1}{\rho}\bx'\bx\right)^{\frac{\nu+d}{2}}},
\end{equation*}
with a factor in this product distribution for each data point. Here $\Gamma$ represents the gamma function.

Note that for $\rho,\nu\rightarrow\infty,\frac{\rho}{\nu}\rightarrow\sigma^2$ this density function tends to the multivariate Normal density function with mean $\bzero$ and covariance $\sigma^2\bI$. For $\rho=\nu=1$ it is a multivariate standard Cauchy distribution, which is so heavy-tailed that its mean is undefined and its second moment is infinitely large. Thus, this type of prior belief can clearly model the expectation of outliers to varying degrees.

%========================================================
\subsection{The subjective information content}
%========================================================

The marginals of a $t$-distribution with given correlation matrix are again a $t$-distribution with the same number of degrees of freedom, obtained by simply selecting the relevant part of the correlation matrix \cite{KoN:04,Rot:13}. This means that the marginal density function for the data projections $\bp_\bw=\bX\bw$ onto a vector $\bw$ with $\bw'\bw=1$ (and $p_{\bw,i}\triangleq \bx_i'\bw$) is:
\begin{equation*}
p_{\bX\bw}(\bp_\bw)=\prod_i p_{\bx'\bw}(p_{\bw,i}),  \mbox{where }  p_{\bx'\bw}(p_{\bw,i})=\frac{\Gamma\left(\frac{\nu+1}{2}\right)}{\sqrt{\pi\rho}\Gamma\left(\frac{\nu}{2}\right)}\cdot\frac{1}{\left(1+\frac{1}{\rho}p_{\bw,i}^2\right)^{\frac{\nu+1}{2}}}.
\end{equation*}
Thus the SIC of the projection pattern $\bp_\bw\in[\hbX\bw,\hbX\bw+\Delta\bone)$ is:
\begin{equation}\label{eq:sic_robust_pca}
\sic{\hbX\bw}=\frac{\nu+1}{2}\sum_{i=1}^{n}\log\left(1+\frac{1}{\rho}(\hbx_i'\bw)^2\right)-n\log(\Delta)+\mbox{a constant}.
\end{equation}

This derivation can be generalized towards the information content of an $r$-dimensional projection onto the columns of an orthogonal matrix $\bW_r\in\Real^{d\times r}$ (i.e. $\bW_r'\bW_r=\bI$). Without proof, we state the information content of the pattern $\bP_{\bW_r}\in[\hbX\bW_r,\hbX\bW_r+\bone\bDelta')$:
\begin{equation}
\sic{\hbX\bW_r}\label{eq:sic_robust_pca2}
=\frac{\nu+r}{2}\sum_{i=1}^{n}\log\left(1+\frac{1}{\rho}\hbx_i'\bW_r\bW_r'\hbx_i\right)-n\sum_{i=1}^r\log(\Delta_i)+\mbox{a constant}.
\end{equation}

%========================================================
\subsection{Finding the most informative projections: an analysis and two algorithms}
%========================================================

As in the derivation of PCA, we will assume that $\bDelta$ is constant. %, and we postpone to further work the case where $\bDelta$ is dependent on the data projection's extremal values.
For ease of exposition, we will focus on the SIC of a single projection as given by Eq.~(\ref{eq:sic_robust_pca}). Taking into account that $\bw'\bw=1$, and ignoring some constant factors and terms, maximising the SIC is thus equivalent to solving the following problem:
\begin{equation}\label{eq:opt_robust_pca}
\max_{\bw} \sum_{i=1}^{n} \log\left(\rho+(\hbx_i'\bw)^2\right), \st \bw'\bw=1.
\end{equation}
Clearly, the larger $\bw'\bw$, the larger the objective, so the constraint can be relaxed to $\bw'\bw\leq 1$.
%Still, this is neither an eigenvalue problem, nor a convex problem. In the next subsection we will briefly discuss the complexity of this problem with a focus on the special case of $\rho=0$, before introducing a heuristic approach to solving the general problem approximately, and subsequently a convex relaxation of the problem to log-det maximization problem.

\begin{remark}
Given the reliance of this approach on the multivariate $t$-distribution as a background distribution, we will refer to this approach as \emph{t-PCA}.
\end{remark}

\begin{remark}
Just like in PCA where the value of $\sigma$ has no effect on which pattern is most interesting, here the value of $\nu$ and thus of $c$ is absent from the final optimization problem, and thus it has no effect on which projection is the most interesting one. (Though $\sigma$ and $c$ do affect the value of the interestingness.) This significantly reduces the demands on the user in specifying their prior beliefs.
\end{remark}

%-----------------------------------------------------------------------------
%\subsubsection{Interpretation}

%The logarithmic transformation of the squares of the projections $\bx_i'\bw$ in the objective function means that t-PCA will be less strongly affected by points with very large projection values. In other words, the weight vector $\bw$ will be drawn to points with large norm less strongly, and points with small norms will have a greater effect relative to their effect in PCA.

\begin{remark}
By varying $\rho$, t-PCA interpolates between maximizing the \emph{arithmetic mean}, like PCA does, and maximizing the \emph{geometric mean} of the squares of the data projections, which is more robust against outliers.
Indeed, for $\rho=0$, the objective function is monotonically related to the geometric mean of the squares of the data projections $(\hbx_i\bw)^2$:
\begin{equation*}
\exp\left[\frac{1}{n}\sum_{i=1}^n\log(\hbx_i'\bw)^2\right] = \left(\prod_{i=1}^n (\hbx_i'\bw)^2\right)^{\frac{1}{n}}.
\end{equation*}
On the other hand, for $\rho\rightarrow\infty$, the objective function is monotonically related to the arithmetic mean, and thus becomes equivalent to the PCA objective function:
\begin{equation*}
\lim_{\rho\rightarrow\infty} \frac{\rho}{n}\sum_{i=1}^n\log\left(\rho+(\hbx_i'\bw)^2\right)-\rho\log(\rho) = \frac{1}{n}\sum_{i=1}^n(\hbx_i'\bw)^2.
\end{equation*}
\end{remark}

%-----------------------------------------------------------------------------
\subsubsection{The complexity of the optimization problem}

To get some insight into the computational complexity of problem~\ref{eq:opt_robust_pca}, let us consider the special case of $\rho=0$. The constraint $\bw'\bw\leq 1$ is convex, and the objective is concave as long as $\bw\in\mathcal{W}_\mathbf{s}\triangleq\{\bw|\sign(\hbX\bw)=\mathbf{s}\}$ for some fixed sign vector $\mathbf{s}$. Indeed, for $\rho=0$ the objective function can be rewritten as $\sum_{i=1}^n\log((\hbx_i'\bw)^2)=\sum_{i=1}^n\log\mbox{det}\left(\begin{array}{cc}s_i\hbx_i'\bw&0\\0&s_i\hbx_i'\bw\end{array}\right)$, which is the sum of $n$ (concave) log determinant functions of linear matrix functions of the parameters $\bw$.

This seems to suggest a possible solution strategy, at least for the case $\rho=0$: enumerate all possible sign vectors $\mathbf{s}$ for the dataset $\hbX$, find a weight vector $\bw$ for each of these, and locally optimize it using a convex optimization problem. However, according to Cover's Function-Counting Theorem \cite{cover1965geometrical}, the number of homogeneously linearly separable dichotomies of $n$ points in $d$-dimensional Euclidean space is $2\sum_{k=0}^{d-1} \binom{n-1}{k} = O\left((n-1)^{d-1}\right)$, which is clearly impractical.

Since even for the special case of $\rho=0$ it is impractical to maximize the information content exactly, we developed two approximation algorithms: the first one a modification of the power method for solving eigenvalue problems, and the second one a convex relaxation to a log-determinant optimization problem.

%-----------------------------------------------------------------------------
\subsubsection{A modified power method}

The stationarity condition of the Karush-Kuhn-Tucker (KKT) optimality conditions for Eq.~\eqref{eq:opt_robust_pca} is:
\begin{equation*}
\left(\sum_{i=1}^n \frac{\hbx_i\hbx_i'}{\rho+(\hbx_i'\bw)^2}\right)\bw = \lambda\bw,
\end{equation*}
with $\lambda\geq 0$ a KKT multiplier corresponding to the constraint $\bw'\bw\leq 1$.
Note that the matrix on the left hand side is essentially a weighted empirical covariance matrix for the data, where points contribute more if they have a smaller value for $(\hbx_i'\bw)^2$: the weight for $\hbx_i\hbx_i'$ is $\frac{1}{\rho+(\hbx_i'\bw)^2}$.

As pointed out above, this optimisation problem is not convex, and also the optimality conditions do not admit a closed form solution in terms of e.g. an eigenvalue problem. Instead we investigated the use of a simple gradient descent method with after each gradient step a projection onto the feasible set $\bw'\bw=1$. This can be formulated as a modified power method \cite{golub2012matrix}:
\begin{enumerate}
\item Start with an initial value for $\bw^{(0)}$, normalized to unit norm.
\item Iterate from $k=1$ until convergence or maximum number of iterations reached:
\begin{enumerate}
\item $\bv^{(k)}=\left(\bI+\alpha\sum_{i=1}^n \frac{\hbx_i\hbx_i'}{\rho+{(\hbx_i'\bw^{(k-1)})^2}}\right)\bw^{(k-1)}$.
\item $\bw^{(k)}=\frac{\bv^{(k)}}{\|\bv^{(k)}\|}$.
\end{enumerate}
\end{enumerate}
Here, $\alpha$ is a step-size parameter that controls the speed of convergence. Clearly this algorithm is not guaranteed to converge, but for sufficiently small $\alpha$ it does converge to a local optimum in practice.\footnote{A detailed convergence analysis is left as further work.}

For $\bw^{(0)}$, we use the dominant eigenvector of $\left(\sum_{i=1}^n\frac{\hbx_i\hbx_i'}{\rho+\hbx_i'\hbx_i}\right)$, which amounts to maximizing an approximation of the SIC obtained by approximating $(\hbx_i'\bw)^2$ with $\hbx_i'\bx_i$. %(which is an upper bound).

%The effect of the parameter $\rho$ is as follows. For a smaller value of $\rho$, the tail of the background distribution can be heavier, as then the nonlinearity of the logarithm in the prior belief constraint will affect data points of smaller magnitude. The effect of this is that outliers (for which $(\bx_i'\bw)^2$ may be very large) will not weigh in as strongly as they would in PCA, as the contribution of $\bx_i\bx_i'$ to what can be thought of as a reweighted covariance matrix is reduced, and relatively more so than for data points for which $(\bx_i'\bw)^2$ is small as compared to $\rho$ (for which the reduction is roughly constant). Informally speaking, $\rho$ is a soft threshold on the squared distance along $\bw$ beyond which data points will no longer be able to bias the solution in their own direction.

As a first approximation, we search for good subsequent projections simply by iteratively deflating the data (i.e. projecting it onto the orthogonal complement of the previously found projections).

%-----------------------------------------------------------------------------
\subsubsection{A convex relaxation}

The problem can also be relaxed to a semidefinite log determinant optimization problem. We will do this more generally for finding the most informative $r$-dimensional projection for general $r$, i.e. of the optimization problem maximizing Eq.~(\ref{eq:sic_robust_pca2}) subject to $\bW_r'\bW_r=\bI$. After removing irrelevant constant terms and factors, the optimization problem we need to solve is equivalent with:
\begin{equation}\label{eq:1}
\max_{\bW_r\in\Real^{d\times r}} \sum_{i=1}^n\log\left(\rho+\bx_i'\bW_r\bW_r'\bx_i\right),
\st \bW_r'\bW_r=\bI.
\end{equation}
We claim that this problem can be rewritten in terms of a new variable $\bM\triangleq\bW_r\bW_r'$ as follows:
\begin{equation}\label{eq:2}
\max_{\bM\in\Real^{d\times d}} \sum_{i=1}^n\log\left(\rho+\bx_i'\bM\bx_i\right),
\st \left\{\begin{array}{l}\trace{\bM}=r,\\
\bM\succeq \bzero,\\
\bI-\bM\succeq \bzero,\\
\rank(\bM)=r.
\end{array}\right.
\end{equation}
\begin{theorem}
Problems \ref{eq:1} and \ref{eq:2} are equivalent.
\end{theorem}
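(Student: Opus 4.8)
The plan is to reduce the claimed equivalence to a geometric characterisation of the feasible set of problem~\ref{eq:2}. Since the objective of problem~\ref{eq:2} depends on $\bM$ only through the quadratic forms $\bx_i'\bM\bx_i$, and the objective of problem~\ref{eq:1} depends on $\bW_r$ only through $\bx_i'\bW_r\bW_r'\bx_i$, it suffices to establish that the substitution $\bM=\bW_r\bW_r'$ puts the two feasible sets in correspondence in a way that makes the objectives coincide term by term. Concretely, I would prove the following: a symmetric $\bM\in\Real^{d\times d}$ satisfies all four constraints of problem~\ref{eq:2} \emph{if and only if} $\bM=\bW_r\bW_r'$ for some $\bW_r\in\Real^{d\times r}$ with $\bW_r'\bW_r=\bI$, i.e.\ exactly when $\bM$ is the orthogonal projector onto an $r$-dimensional subspace. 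Granting this, both problems optimise the same expression over sets in bijective correspondence (modulo the right $r\times r$ orthogonal transformations of $\bW_r$, which leave $\bM$ and hence the objective unchanged), so their optimal values and optimisers agree.

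For the forward direction I would take any feasible $\bW_r$, set $\bM=\bW_r\bW_r'$, and verify the four constraints directly. The matrix $\bM$ is symmetric and idempotent, since $\bM^2=\bW_r(\bW_r'\bW_r)\bW_r'=\bW_r\bW_r'=\bM$ using $\bW_r'\bW_r=\bI$; hence $\bM$ is an orthogonal projector with eigenvalues in $\{0,1\}$, which immediately gives $\bM\succeq\bzero$ and $\bI-\bM\succeq\bzero$. The trace identity $\trace{\bM}=\trace{\bW_r'\bW_r}=\trace{\bI}=r$ supplies the trace constraint, and because the eigenvalues are $0$ or $1$, the trace also counts the unit eigenvalues, so $\rank(\bM)=r$.

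For the reverse direction I would start from a feasible $\bM$ and use its eigendecomposition $\bM=\sum_{j=1}^d\lambda_j\bv_j\bv_j'$ with orthonormal $\bv_j$. The constraints $\bM\succeq\bzero$ and $\bI-\bM\succeq\bzero$ pin every eigenvalue into $\lambda_j\in[0,1]$; the rank constraint says exactly $r$ of them are nonzero, while $\trace{\bM}=\sum_j\lambda_j=r$. The crucial step is a pinching argument: $r$ values, each at most $1$, summing to $r$, must each equal exactly $1$, with the remaining $d-r$ eigenvalues equal to $0$. Collecting the $r$ eigenvectors carrying unit eigenvalue into a matrix $\bW_r$ then yields $\bW_r'\bW_r=\bI$ and $\bM=\bW_r\bW_r'$. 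Finally, under this correspondence $\bx_i'\bM\bx_i=\bx_i'\bW_r\bW_r'\bx_i$ for every $i$, so the objectives are identical and the equivalence follows.

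I expect the reverse direction, and within it the pinching step that combines the trace, rank, and spectral-bound constraints, to be the only part needing genuine care; the rest is routine computation. It is worth flagging explicitly why the $\rank(\bM)=r$ constraint is indispensable: without it the feasible set would be the \emph{convex} Fantope $\{\bM:\bzero\preceq\bM\preceq\bI,\ \trace{\bM}=r\}$, whose extreme points are precisely the rank-$r$ projectors. The rank constraint is exactly what restricts $\bM$ to such a projector, turning what would otherwise be a relaxation into a genuine reformulation of problem~\ref{eq:1}.
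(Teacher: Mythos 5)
Your proof is correct and follows essentially the same route as the paper's: substitute $\bM=\bW_r\bW_r'$ to identify the objectives, then show the feasible sets coincide by an eigenvalue argument in which the rank, trace, and the two semidefiniteness constraints force $\bM$ to have exactly $r$ unit eigenvalues and $d-r$ zero eigenvalues. Your write-up is somewhat more detailed (explicit idempotence in the forward direction, the Fantope remark), but the key pinching step is exactly the paper's argument.
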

\begin{proof}
Given $\bM=\bW_r\bW_r'$, the objective functions are clearly equivalent. Thus all we need to show is that the feasible sets are identical as well. It is easy to verify that the constraints in problem~\ref{eq:1} imply the constraints in problem~\ref{eq:2}. Also the converse is true. Indeed, the rank constraint forces all but $r$ eigenvalues to be $0$. The constraints $\bI-\bM\succeq\bzero$ and $\bM\succeq\bzero$ ensure all eigenvalues lie between $0$ and $1$. And the trace constraint ensures all eigenvalues add up to $r$. Hence, $\bM$ must have $r$ eigenvalues equal to $1$, and $d-r$ equal to $0$. Thus, we can write the eigenvalue decomposition of $\bM$ as: $\bM=\bW_r\bI\bW_r'$, with $\bW_r$ an orthogonal matrix as required by the constraint in the previous formulation. Thus, the constraint set of problem~\ref{eq:2} also implies the constraint of problem~\ref{eq:1}.
\end{proof}

The only non-convex constraint in problem formulation~\ref{eq:2} is the rank constraint. We suggest to drop that constraint to relax this problem. Note that for $r=1$, the constraint $\bI-\bM\succeq\bzero$ is redundant given $\trace{\bM}=1$ and $\bM\succeq \bzero$, and can therefore also be dropped. To obtain an estimate for the unrelaxed weight matrix $\bW_r$, we suggest to use the $r$ dominant eigenvectors of $\bM$.

%========================================================
%========================================================
\section{Empirical evaluation\label{sec:empirical}}
%========================================================
%========================================================

In Section \ref{sec:synthetic_data} we evaluate the behaviour of t-PCA in comparison with PCA in a controlled setting on synthetic data. Then, in Section \ref{sec:real_data}, we demonstrate the practical usefulness of t-PCA on some real-life datasets, and compare its results with PCA as well as a popular projection pursuit method (FastICA).

%========================================================
\subsection{Experiments on synthetic data\label{sec:synthetic_data}}
%========================================================

\paragraph{Comparing PCA with t-PCA}
We generated a synthetic dataset consisting of two populations: a population with a small spread and $8000$ data points, and a population with a large spread and $2000$ data points. More specifically, both populations were sampled from a $100$-dimensional multivariate Normal distribution with diagonal covariance. For the large population the variances were sampled from a $\chi^2$ distribution with 1 degree of freedom; for the small population the same process was used, but the covariance matrix was then multiplied by $100$. %More exactly, it was generated by randomly sampling $8,000$ data points from an $100$-dimensional Normal distribution, with a random diagonal covariance matrix. The standard deviations along each of the dimensions are the absolute value of a standard normal random number. Then, $2,000$ additional data points are generated according to the same process, but with standard deviations generated as the absolute value of a normal distribution with standard deviation 10.

As shown in Fig.~\ref{fig:artificial_combined} (left figure), due to the sensitivity of PCA to outliers, the dominant PCA direction is determined almost exclusively by the small population with large spread. t-PCA (here with the power method), however, offers an insight into the large population with lower spread as well.

\begin{figure}[t]
\centering
%\hspace*{-0.8cm}\includegraphics[width=1.08\textwidth]{artificial_combined.eps}
\includegraphics[width=0.336\textwidth]{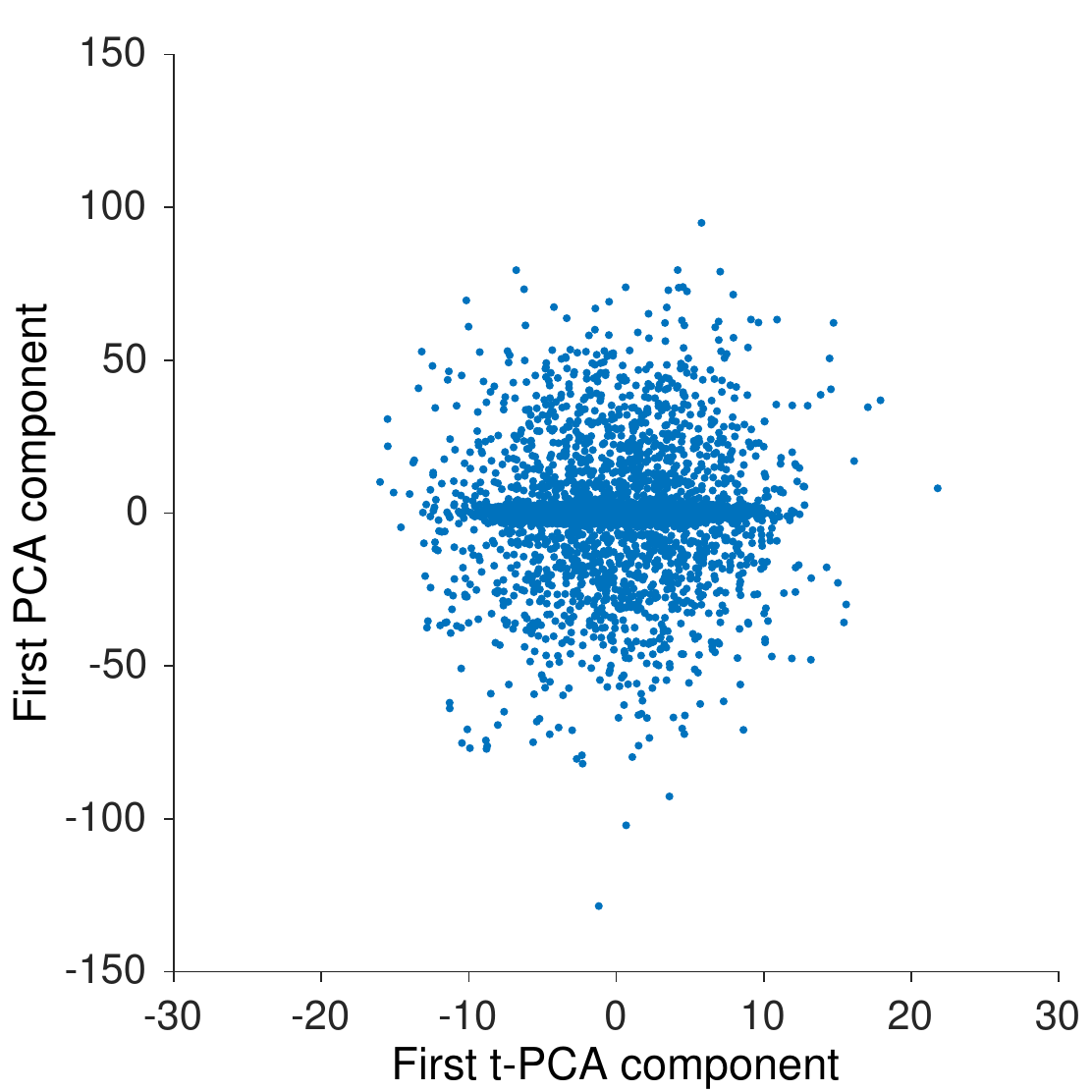}
\includegraphics[width=0.63\textwidth]{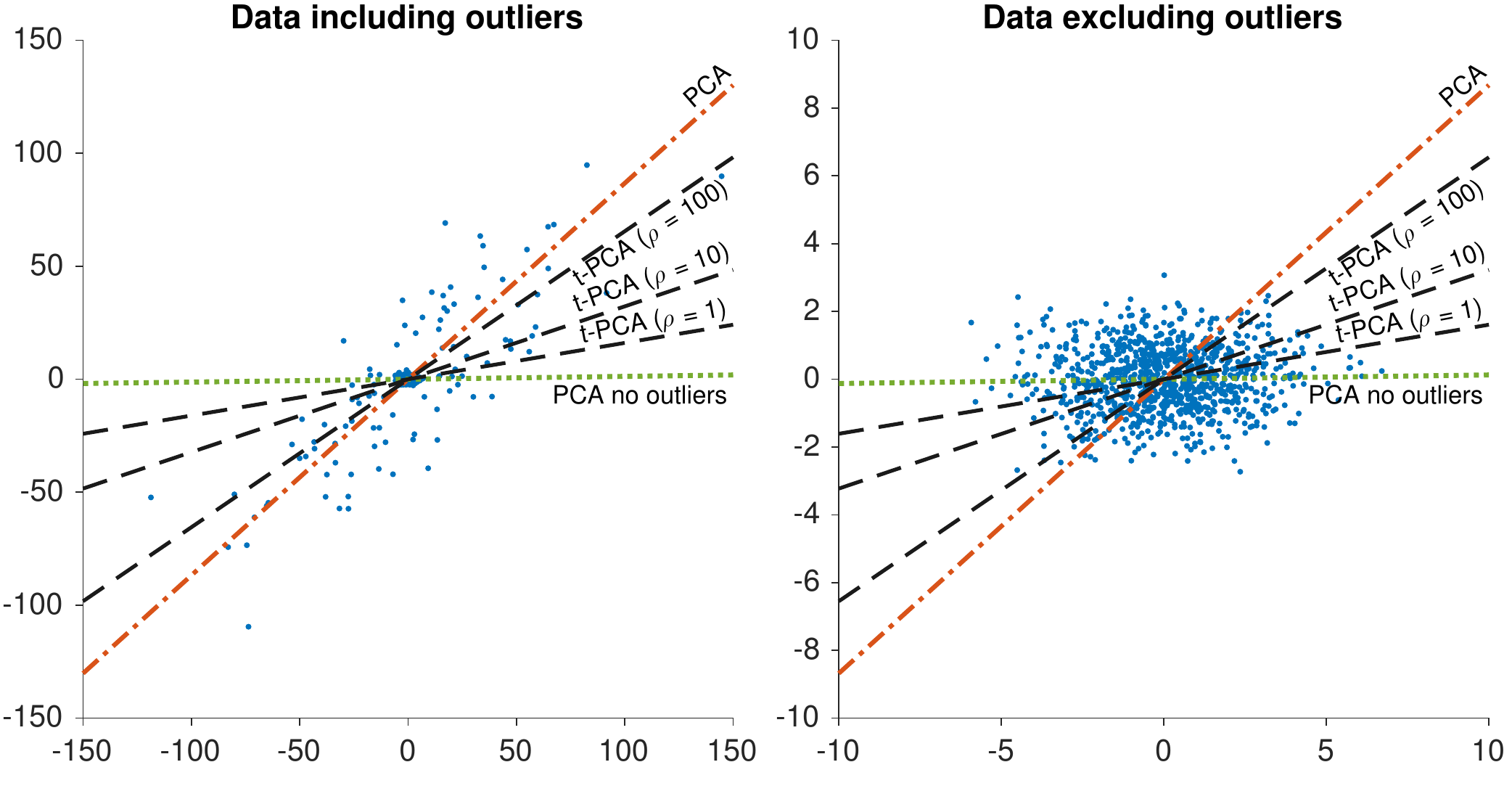}
\caption{Left: comparison of the dominant PCA projection (vertical axis) with the most informative t-PCA projection with $\rho=0$ (horizontal axis). t-PCA shows more detail in the central less spread out point cloud that contains most data points. Middle and right: a scatter plot of all data points in the original space, including (middle) and excluding (right) outliers, with weight vectors of PCA (dot-dashed red line), as well as t-PCA computed with the modified power method with $\rho=1,10,100$ (dashed black lines) and PCA fitted on data excluding the 100 outliers (dotted green line).\label{fig:artificial_combined}}
\end{figure}

The information content of the solution found by the SDP relaxation and modified power method are $1.43\times 10^4$ and $1.44\times 10^4$ respectively, while that of PCA is much worse at $3.18\times 10^3$. Note that the SDP relaxation took around 3 hours, compared to $30$ seconds for the modified power method, on this $10.000\times 100$ dataset. Usefully, the relaxation also provides us with an upper bound, namely $1.681\times 10^4$. Thus, both the SDP relaxation and the modified power method are close to optimal.

\paragraph{The effect of $\rho$}
To illustrate the robustness of t-PCA, consider a dataset consisting of two populations with different covariance structures: 1000 data points sampled from the Normal distribution $\mathcal{N}\left(\bzero,\left(\begin{array}{cc} 4 & 0 \\ 0 & 1 \end{array}\right)\right)$, and 100 `outliers' from a Normal distribution $\mathcal{N}\left(\bzero,\left(\begin{array}{cc} 16 & 12 \\ 12 & 13 \end{array}\right)\right)$.

The weight vector resulting from PCA is shown with a dashed red line in Fig.~\ref{fig:artificial_combined} (right two figures). The full black lines show the weight vectors retrieved by t-PCA, with values for $\rho$ equal to $1, 10,$ and $100$. The largest value of these resulted in the line closest to the PCA result. The green dotted line shows the weight vector that would have been found using PCA had there been no outliers at all (i.e. computed just on the first 1000 data points).
The middle plot in Fig.~\ref{fig:artificial_combined} demonstrates that the PCA result is determined primarily by the outliers. The right plot shows the same resulting weight vectors on top of a scatter plot of excluding the $100$ outliers, showing that t-PCA is hardly affected by the outliers.% and primarily determined by the dominant variance direction in the bulk of the data points excluding the outliers.

%========================================================
\subsection{Experiments on real-life data\label{sec:real_data}}
%========================================================

We used two realistic datasets: the Shuttle Dataset\footnote{\texttt{https://archive.ics.uci.edu/ml/datasets/Statlog+(Shuttle)}} ($58000$ datapoints and 9 numerical dimensions) available from the UCI repository, and a reduced version\footnote{http://cs.nyu.edu/~roweis/data.html} of the 20 NewsGroups dataset ($16242$ datapoints and 100 dimensions). Both these datasets exhibit some complex structure, and the former in particular has a highly imbalanced cluster structure (class 1 contains $80\%$ of all data points).

The algorithms evaluated include PCA, t-PCA with $\rho$ equal to $10^{-5}$ multiplied by a measure of the scale of the data equal to the square root of the average squared norm of all data points, and a popular PP method often used for ICA, known as FastICA, with default parameters.

The results are shown in Fig.~\ref{fig:real}. The four top-level newsgroup classes in the reduced 20 NewsGroups dataset, and the seven classes in the Shuttle dataset, are shown in different colours. In all cases, the t-PCA version appears to reveal a more interesting structure in the data than either PCA or FastICA %\footnote{Note that FastICA with other contrast functions yielded qualitatively similar results.}
 do. Remarkably, for the 20 NewsGroups dataset, the t-PCA weight vectors are close to sparse: more than $97\%$ of the total variance of the weight vectors are due to just three, four of the 100 dimensions (i.e. words), respectively: `email', `help', and `problem' for the first projection and `case', `fact', `god', and `question' for the second. The FastICA weight vectors, in contrast, have almost all weight on a single dimension, explaining that all datapoints are projected onto one of just three points when projecting onto the two top ICA components.

\begin{figure}[t]
\centering
\includegraphics[width=\textwidth]{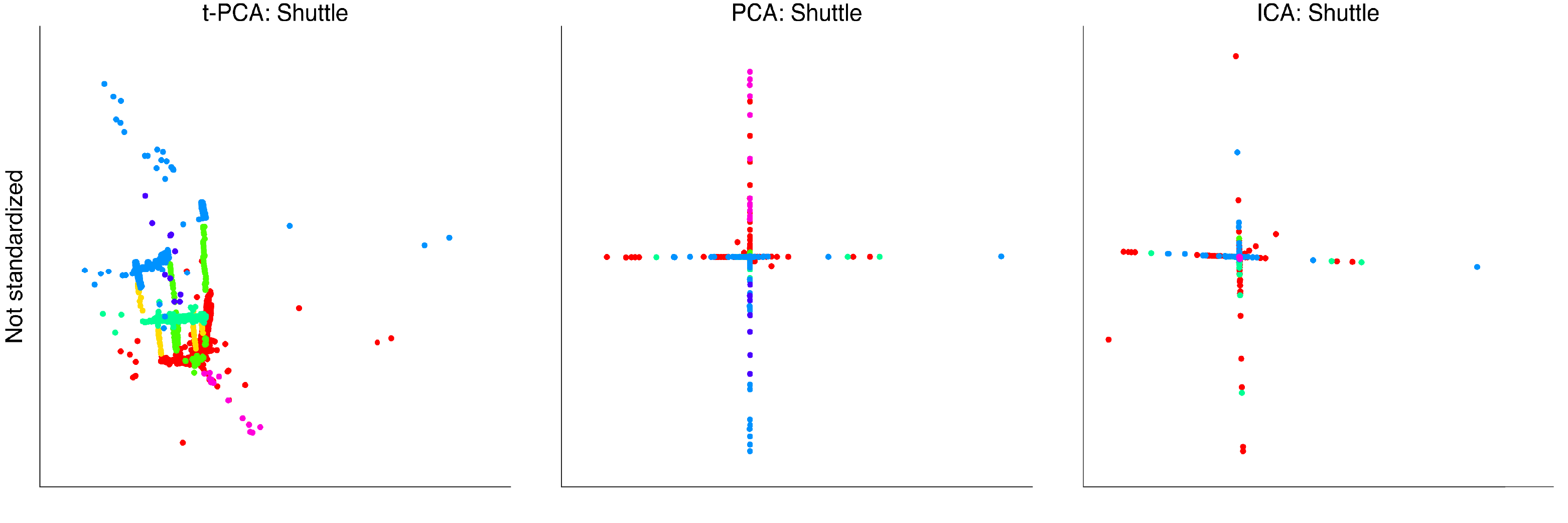}
\includegraphics[width=\textwidth]{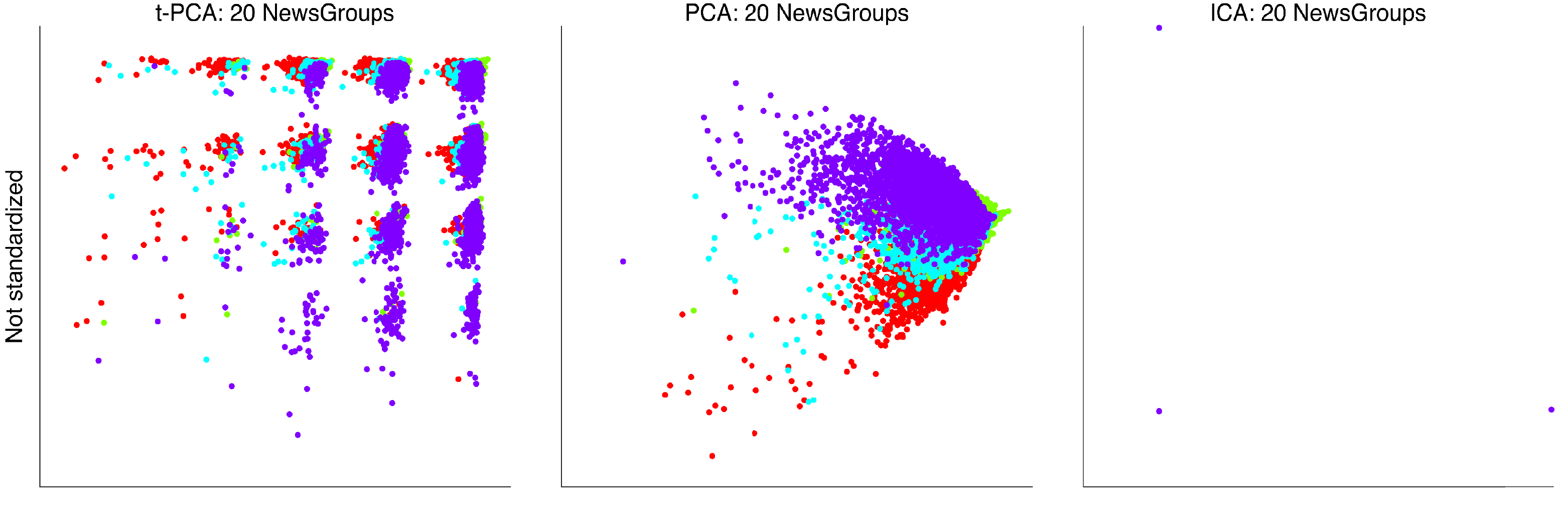}
\caption{The top 2 projections found by t-PCA (left), PCA (middle), and FastICA (right). Top row: Shuttle; bottom row: 20 NewsGroups.\label{fig:real}}
\end{figure}

%========================================================
%========================================================
\section{Conclusions}
%========================================================
%========================================================

PCA is often notoriously inappropriate for dimensionality reduction, e.g. in the presence of outliers. To address this the Projection Pursuit literature has introduced numerous \emph{projection indices} that quantify the interestingness of a projection in different ways. More recently, various authors also proposed principled \emph{robust} versions of PCA as an alternative, e.g. \cite{CLM:11,feng2012robust}. Yet, while these lines of work are useful when the assumptions made are valid, they do not fundamentally address how interesting a data projection is to a user. We presented a new approach to this elusive problem, explicitly recognizing the subjective nature of the notion `interestingness'.

Avenues for further work include alternative prior beliefs and data types, e.g. the case where the data is not real-valued but positive integer-valued or where the user assumes dependencies between the data points (when they are vectors in a time series, geographical locations, people in a social network, etc.). More immediately, the computational properties of the t-PCA optimization problem and its convex relaxation are worth investigating. Finally, an open question is to what extent the proposed strategy can be applied to non-linear dimensionality reduction as well.

\begin{footnotesize}
\bibliographystyle{unsrt}
\bibliography{informative_projections}
\end{footnotesize}

\end{document}